\newtheorem{theorem}{Theorem}
\newtheorem{question}[theorem]{Question}
\newtheorem{lemma}[theorem]{Lemma}
\newtheorem{proposition}[theorem]{Proposition}
\newtheorem{corollary}[theorem]{Corollary}
\newtheorem*{theorem*}{Theorem}
\newtheorem*{lemma*}{Lemma}
\theoremstyle{definition}
\newtheorem{remark}{Remark}
\newtheorem*{remark*}{Remark}
\newtheorem*{example*}{Example}
\newtheorem*{er*}{Examples and Remarks}
\newcommand{\nnrm}[1]{{\vert\kern-0.25ex\vert\kern-0.25ex\vert #1 \vert\kern-0.25ex\vert\kern-0.25ex\vert}}
\begin{document}

\title[Fourier series and deep neural network]
{
Pointwise convergence of Fourier series and deep neural network
for the indicator function of d-dimensional ball
} 

\author{Ryota Kawasumi}
\address{Faculty of Business Administration, Kobe Gakuin University, 1-1-3 Minatojima, Chuo-ku, Kobe, Hyogo, 650-8586, Japan} 
\email{kawasumi@ba.kobegakuin.ac.jp} 

\author{Tsuyoshi Yoneda} 
\address{Graduate School of Economics, Hitotsubashi University, 2-1 Naka, Kunitachi, Tokyo 186-8601, Japan} 
\email{t.yoneda@r.hit-u.ac.jp}

\subjclass[2020]{Primary 41A29; Secondary 68T07} 

\date{\today} 

\keywords{Non-separable function space, Fourier series, deep neural network.} 

\begin{abstract} 
In this paper, we
clarify the crucial difference between
a deep neural network and the Fourier series.
For the multiple Fourier series of periodization of some
radial functions on $\mathbb{R}^d$, Kuratsubo (2010) investigated the behavior of the
spherical partial sum and discovered the third phenomenon other than the well-known Gibbs-Wilbraham and Pinsky phenomena.
In particular, the third one
exhibits prevention of pointwise convergence.
In contrast to it,  we give a specific deep neural network 
and prove  pointwise convergence.
\end{abstract}

\maketitle

\section{Introduction}
In this paper we
clarify the crucial difference between
a deep neural network and the Fourier series.
Let $d\gg1$ be a dimension and fix it.
For $x\in[-1,1)^d$, let  $f^\circ$ be  the indicator function of the $d$ dimensional ball such that 
\begin{equation*}
f^\circ(x)=
\begin{cases}
1,\quad |x|\leq 1/2,\\
0,\quad |x|>1/2.
\end{cases}
\end{equation*}
Let $\rho:\mathbb{Z}_{\geq 1}\to\mathbb{Z}_{\ge1}$ be a monotone increasing function with $\rho(N)\to\infty$ ($N\to\infty$).
Let $f_N:\mathbb{R}^{\rho(N)}\times[-1,1)^d\to\mathbb{R}$ be a prescribed piecewise smooth real-valued function with parameter $N\in\mathbb{Z}_{\geq 1}$,
for example, the sum of $\rho(N)\sim N^d$ terms (i.e. Fourier series) or composite functions $\rho(N)\sim\log N$ times (i.e. deep neural networks).
Also, let $\{W^t\}_{t=0}^\infty$ be sets of parameters, and let $W^t:=\{w^t_j\}_{j=1}^{\rho(N)}\in\mathbb{R}^{\rho(N)}$ be generated by the following gradient descent:
\begin{equation*}
E(W^t):=\frac{1}{2}
\int_{[-1,1)^d}|f_N(W^t,x)-f^\circ(x)|^2dx,
\end{equation*}
\begin{equation}\label{usual grad descent}
w^{t+1}_j=w^t_j-\frac{\partial}{\partial w^t_j} E(W^t).
\end{equation}
Let $\{f^\circ_N\}_{N=1}^\infty$ be a sequence of functions
such that (if there exists)
\begin{equation*}
f_{N}^\circ(x):=\lim_{t\to\infty}f_N(W^t,x)\quad\text{for}\quad x\in [-1,1)^d.
\end{equation*}
Throughout this paper, we consider the following question:
\begin{question}\label{question}
For given $f_N$,  
 can we find a sequences of initial parameters $W^{t=0}_N\in\mathbb{R}^{\rho(N)}$ ($N=1,2,\cdots$)
 assuring the following pointwise convergence? 
\begin{equation*}
\lim_{N\to\infty}f^\circ_N(x)=f^\circ(x)\quad\text{for all}\quad
x\in [-1,1)^d.
\end{equation*}
\end{question}

Let us briefly review recent studies related to this question
(for classical universal approximation theorems, see \cite{C, EMW, H, LL, Steunwart} for example).
Suzuki \cite{S} clarified that deep learning can achieve the minimax optimal rate and outperform any nonadaptive linear estimator such as kernel regression, 
when the target function is in the supercritical Besov spaces $B^s_{p,q}$ with $p<2$ and $s<d/2$ ($d$ is the dimension, note that the case $s=d/2$ is called ``critical"),
which indicates the spatial inhomogeneity of the shape of the target function including the non-smooth functions.
We briefly explain the key idea of \cite{S} in the following:
To show the approximation error theorems,
he first applied the wavelet expansion to the target functions
and then approximated each wavelet bases (composed of spline functions) by ReLU-DNN (see \cite{Y}).
More specifically, the following assertion holds:
\begin{lemma}{\rm \cite[Propositon~2]{Y}}\label{lem:xx}
The function $f(x) = x^2$on the segment $[0,1]$ can be approximated with any
error $\epsilon >0$ by a ReLU network having the depth and the number of weights and computation
units ${\mathcal O}(\log 1/\epsilon)$.
\end{lemma}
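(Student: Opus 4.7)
The plan is to use Yarotsky's sawtooth/tooth-function construction: build a simple tent function with ReLUs, iterate its composition to obtain a family of sawtooth functions with exponentially many teeth, and exploit a telescoping identity that expresses the piecewise linear interpolant of $x^2$ as a linear combination of these iterates. Because each composition only doubles the number of teeth (and hence halves the dyadic scale of interpolation), depth $m$ suffices to achieve approximation error $O(2^{-2m})$, which is what makes the depth logarithmic in $1/\epsilon$.

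More concretely, I would first define the tent function $g:[0,1]\to[0,1]$ by $g(x)=2x$ on $[0,1/2]$ and $g(x)=2(1-x)$ on $[1/2,1]$, and note that it can be written as a single ReLU layer, e.g.\ $g(x)=2\sigma(x)-4\sigma(x-1/2)+2\sigma(x-1)$ with $\sigma(\cdot)=\max(\cdot,0)$. Let $g_s$ denote the $s$-fold self-composition of $g$; by induction $g_s$ is a sawtooth function on $[0,1]$ with $2^{s-1}$ identical teeth of height $1$, and it is realizable by a ReLU network of depth $s$ and constant width. This step is mostly routine bookkeeping; the only care needed is to handle the boundary of $[0,1]$ so that composing tent functions really does produce the claimed sawtooth.

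The heart of the argument is the telescoping identity
\begin{equation*}
f_m(x)\;=\;x\;-\;\sum_{s=1}^{m}\frac{g_s(x)}{2^{2s}},
\end{equation*}
where $f_m$ is the piecewise linear interpolant of $x\mapsto x^2$ at the dyadic nodes $k/2^m$, $k=0,1,\dots,2^m$. I would prove this by induction on $m$: the base case $f_0(x)=x$ is immediate, and the inductive step follows because refining the mesh from scale $2^{-(m-1)}$ to $2^{-m}$ subtracts off a sawtooth whose teeth have height $2^{-2m}$ (the maximal gap between $x^2$ and its linear interpolant on an interval of length $2^{-m}$, using the uniform bound on the second derivative of $x^2$). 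This identity is the crucial step and, in my view, the main conceptual obstacle; once it is in hand the rest is essentially packaging.

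Finally, I would estimate the error. Since $f_m$ is the piecewise linear interpolant on a grid of width $2^{-m}$ of a function with $\|f''\|_\infty=2$, standard interpolation theory yields $\|x^2-f_m\|_{L^\infty[0,1]}\le 2^{-2m-2}$. Choosing $m=\lceil \tfrac{1}{2}\log_2(1/\epsilon)\rceil+C$ gives error $\le \epsilon$. The network that implements $f_m$ reuses one ``trunk'' that repeatedly applies $g$ (depth $m$, constant width per layer) and, at layer $s$, taps off the value of $g_s$, multiplies by $2^{-2s}$, and accumulates it into a running sum together with a short-cut $+x$ channel. The total depth and the total number of weights/units are both $O(m)=O(\log 1/\epsilon)$, as required.
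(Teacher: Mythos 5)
Your proposal correctly reproduces Yarotsky's original argument for \cite[Proposition~2]{Y} (the tent function, its iterated composition into sawtooth functions, the telescoping identity expressing the dyadic piecewise-linear interpolant of $x^2$, and the resulting $O(\log 1/\epsilon)$ depth and size bounds). The paper does not give its own proof but simply cites Yarotsky, and your construction is precisely the cited one, so the approaches coincide.
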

We emphasize that the proof of this lemma is based on ``'separability of function spaces''.
By this Lemma~\ref{lem:xx}, for deriving the multi-dimensional polynomials, it suffices to apply the following formula:
\begin{equation*}
xy=\frac{1}{2}((x+y)^2-x^2-y^2),
\end{equation*}
and then we can easily approximate multi-dimensional spline functions
by ReLU-DNN.
The other idea (for variance) is applying the statistical argument in 
\cite{SH}
combined with a covering number evaluation.
To the contrary, since Question \ref{question} is essentially based on ``non-separable function spaces'' as $L^\infty(\mathbb{R}^d)$,
 we cannot  directly apply their argument to ours.

Before going any further, we point out that manipulating the structure of separable
 function spaces may not be enough to capture the discontinuity structure of $f^\circ$. This means that, to  make any further progress, we may need to 
directly look into each DNN dynamics.
The flavor of this insight seems very similar to the recent mathematical studies on the incompressible inviscid flows. See \cite{BL,BL1, EJ, EM, MY} for example.
More precisely, these have been 
directly looking into the dynamics of inviscid fluids in the critical function spaces (to show ill-posedness), and the argument seems quite different from the classical studies focusing on well-posedness in subcritical type of function spaces. See \cite{Chae, Kato, KP, PP, V} for example. To show the well-posedness, the structure of subcritical function spaces, more precisely, 
commutator estimates are crucially indispensable.

This paper is organized as follows: In the next section, we explain the Fourier series case.
In Section 3, we investigate the deep neural network case.
In the last section, we give a proof of the main theorem.

\section{The Fourier series case}

In this section we consider Question \ref{question} in the Fourier series case.
Let $f_N$ be a Fourier series with spherical partial sum:
\begin{equation*}
\begin{split}
f_N(W^t,x)
&:=\sum_{\stackrel{k \in \mathbb{Z}^d,}{|k|< N}}\left(a^t_k\cos (k\pi\cdot x)+b^t_k\sin (k\pi\cdot x)\right),\\
W^t
&:=\{a_k^t\}_{|k|<N}\cup\{b^t_k\}_{|k|<N}
\subset\mathbb{R}.
\end{split}
\end{equation*}
By Parseval's identity, we immediately  have the following convergences:  
For any $W^{t=0}\in\mathbb{R}^{\rho(N)}$, we have 
\begin{equation*}
\lim_{t\to\infty}f_N(W^t,x)=f^\circ_N(x) \quad\text{for}\quad x\in[-1,1)^d,
\end{equation*}
where
\begin{equation*}
\begin{split}
&f^\circ_N(x)
=\sum_{\stackrel{k \in \mathbb{Z}^d,}{|k|< N}}\left(a^\infty_k\cos (k\pi\cdot x)+b^\infty_k\sin (k\pi\cdot x)\right),\\
&a^\infty_k=\int_{[-1,1)^d}f^\circ(x)\cos(k\pi\cdot x)dx\quad\text{and}\quad  b^\infty_k=\int_{[-1,1)^d}f^\circ(x)\sin(k\pi\cdot x)dx.
\end{split}
\end{equation*}
Then we obtain the following proposition.
\begin{proposition}
Let $d\geq 5$.
Then, for any $x\in\mathbb{Q}^d\cap [-1,1)^d$,
\begin{equation*}
f^\circ_N(x)-f^\circ(x) \quad\text{diverges as}\quad N\to\infty.
\end{equation*}
Thus the answer of Question \ref{question} in this case is ``negative".
\end{proposition}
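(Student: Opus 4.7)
The plan is to compute the Fourier coefficients of the radial indicator $f^\circ$ explicitly, then analyze the tail of the spherical partial sum at rational points by combining Bessel function asymptotics with number-theoretic estimates for lattice points on spheres, in the spirit of Kuratsubo's analysis.

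Since $f^\circ$ is radial, a standard Fourier computation in polar coordinates gives
\begin{equation*}
a_k^\infty + i b_k^\infty = c_d \, \frac{J_{d/2}(\pi|k|/2)}{|k|^{d/2}}
\end{equation*}
for a dimensional constant $c_d$, and the large-argument asymptotic of $J_{d/2}$ yields
\begin{equation*}
\hat{f}^\circ(k) = \tilde{c}_d \, \frac{\cos\!\bigl(\pi|k|/2 - (d+1)\pi/4\bigr)}{|k|^{(d+1)/2}} + O\!\bigl(|k|^{-(d+3)/2}\bigr).
\end{equation*}
I would then rewrite $f^\circ_N(x) - f^\circ(x)$ as a tail over $|k|\ge N$ and regroup lattice points by the value of $|k|^2 = n$. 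Setting $S_n(x) := \sum_{|k|^2 = n} e^{i\pi k\cdot x}$, the dominant contribution to the tail is
\begin{equation*}
-\tilde{c}_d \sum_{n > N^2} \frac{\cos\!\bigl(\pi\sqrt{n}/2 - (d+1)\pi/4\bigr)}{n^{(d+1)/4}}\, S_n(x),
\end{equation*}
modulo an absolutely summable remainder coming from the $O(|k|^{-(d+3)/2})$ term.

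For $x\in\mathbb{Q}^d$ with common denominator $q$, the exponential sum $S_n(x)$ reduces to a weighted count of lattice points on the sphere $\{|k|^2 = n\}$ by residues modulo $2q$. The Hardy--Littlewood circle method, whose singular series converges absolutely precisely in the range $d\ge 5$, gives a decomposition
\begin{equation*}
S_n(x) = \mathfrak{S}(x;n)\, r_d(n) + E_n(x),
\end{equation*}
where $r_d(n) := \#\{k\in\mathbb{Z}^d : |k|^2=n\}$ grows like $n^{d/2 - 1}$ on average, $\mathfrak{S}(x;n)$ is a bounded arithmetic factor that is nonzero along a positive-density arithmetic progression (determined by the local congruence conditions imposed by $x$), and $E_n(x)$ is a controlled minor-arc error.

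Combining these estimates, a typical block contributes magnitude $n^{d/2 - 1} \cdot n^{-(d+1)/4} = n^{(d-5)/4}$, which is non-decaying for $d\ge 5$. To conclude divergence, one shows that the oscillating phase $\cos(\pi\sqrt{n}/2 - \alpha)$ does not cancel the arithmetic weight $\mathfrak{S}(x;n)$ on the progressions where the latter is bounded away from zero; a Weyl-type equidistribution estimate for $\sqrt{n} \pmod{2}$ relative to the arithmetic progressions carrying $\mathfrak{S}(x;n)\ne 0$ suffices. The main obstacle is precisely this last step: rigorously separating the Bessel phase from the arithmetic factor at an arbitrary rational $x$, which is the delicate number-theoretic core of Kuratsubo's theorem and is what ultimately forces the hypothesis $d\ge 5$ (below which the singular series loses absolute convergence and the whole arithmetic-analytic split breaks down).
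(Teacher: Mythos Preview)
The paper does not actually prove this proposition; it simply invokes Kuratsubo's theorem \cite{K} as a black box. Your proposal attempts more than the paper does, and the ingredients you list (Bessel asymptotics for the radial Fourier coefficients, grouping by spheres $|k|^2=n$, and the arithmetic of $r_d(n)$ via the circle method in the range $d\ge 5$) are indeed the right ones and are in the spirit of \cite{K,KNO,KN}.

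That said, your sketch has two genuine gaps. First, writing $f^\circ_N(x)-f^\circ(x)$ as ``a tail over $|k|\ge N$'' is circular: this identity presupposes that the full Fourier series converges pointwise to $f^\circ(x)$, which is precisely what you are trying to disprove. The correct formulation works with the partial sums directly, via an integral representation (a Bochner--Riesz/Hardy-type identity relating the spherical partial sum to lattice-point discrepancies), not via a tail. Second, you explicitly flag the decisive step --- showing that the Bessel phase $\cos(\pi\sqrt{n}/2-\alpha)$ does not cancel the arithmetic weight $\mathfrak{S}(x;n)$ along the relevant progressions --- as ``the main obstacle'' and leave it unresolved. This is not a technicality that can be waved at with a generic Weyl equidistribution statement; it is the number-theoretic core of Kuratsubo's third phenomenon, and in the actual proofs one needs the generalized Hardy identity together with sharp lower bounds on lattice-point discrepancies (see \cite{KNO,KN}). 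As written, your proposal is a plausible outline of the strategy but not a proof.
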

The proof is just direct consequence of Kuratsubo \cite{K} (see also \cite{KNO,KN}). 
Note that, for the Fourier series of the indicator functions
of such several dimensional balls,
the Gibbs and Pinsky phenomena have already been well-known.
The assertion of this proposition is that Kuratsubo \cite{K} discoverd the third phenomenon (preventing pointwise convergence), other than the previous two phenomena.
To understand this third phenomenon intuitivelly, see numerical computations in Section 7 in \cite{KNO}.

\section{The specific deep neural network case}

In this section we investigate the deep neural network case.
Let $N=2^n$ ($n\in\mathbb{N}$) and 
let $h$ be the ReLU function such that 
\begin{equation*}
h(x):=\max\{x,0\}, \quad\text{for}\quad x\in {\mathbb R}.
\end{equation*}
We now construct a deep neural network $f_N$ with a special shape.
For the initial layer, we define
\begin{equation*}
z^1:= z^1(x) :=h(w x+b):=
\begin{pmatrix}
h(w_1\cdot x+b_1)\\
\vdots\\
h(w_{2^n}\cdot x+b_{2^n})
\end{pmatrix}
\end{equation*}
for 
$x\in[-1,1)^d$, $w:=\{w_j\}_{j=1}^{2^n}\in\mathbb{R}^{2^n\times d}$, $b:=\{b_j\}_{j=1}^{2^n}\in\mathbb{R}^{2^n}$ and  $z^1 := \{z_j^1\}_{j=1}^{2^n}\in\mathbb{R}^{2^n}$. 
Moreover we set
\begin{equation}\label{b}
b_j:=\frac{|w_j|}{2}+1.
\end{equation}
For the $2k$-th layer, 
 we put the following sparsity structure:
for $J=1,2,\cdots, 2^{n-k}$ and $1\leq k\leq n$,
\begin{equation}\label{flipping}
\begin{split}
z^{2k}_{3J-2}&=h\left(\frac{1}{2}z^{2k-1}_{2J-1}+\frac{1}{2}z_{2J}^{2k-1}\right)
,\\
z^{2k}_{3J-1}&=h\left(\frac{1}{2}z^{2k-1}_{2J-1}-\frac{1}{2}z_{2J}^{2k-1}\right)
,\\
z^{2k}_{3J}&=h\left(-\frac{1}{2}z^{2k-1}_{2J-1}+\frac{1}{2}z_{2J}^{2k-1}\right),
\\
\end{split}
\end{equation}
where $z^{2k}=\{z_j^{2k}\}_{j=1}^{2^n}$, $z^{2k}\in \mathbb{R}^{3\cdot 2^{n-k}}$.
For the $2k+1$ layer, we put the following linear structure:
for $J=1,2,\cdots 2^{n-k}$, 
\begin{equation}\label{cancellation}
z^{2k+1}_J
=z^{2k}_{3J-2}-z^{2k}_{3J-1}-z^{2k}_{3J}
,
\end{equation}
where $z^{2k+1}\in\mathbb{R}^{2^{n-k}}$.
The key idea of designing such deep neural network is that, combined \eqref{flipping} and \eqref{cancellation} induce cancellations of certain $w_j$, and then  
these cancellations suppress the generation of concavity. Consequently, the convexity remains.

\vspace{0.3cm}

We see that, in the $2n+1$ layer, 
$z^{2n+1}=z^{2n+1}_1$ becomes a real number.
Then we set $f_N$ as the following:
\begin{equation*}
\begin{split}
f_N
&:=\min\{z^{2n+1}_1,1\}.
\end{split}
\end{equation*}
For the initial parameters $W^{t=0}:=\{w^{t=0}_j\}_{j=1}^N$, we assume the following
 largeness condition and 
uniform distribution condition:
\begin{equation}\label{initial condition}
|w^{t=0}_j|
>\frac{1}{\delta}\quad (\text{we clarify this $\delta$ later}).
\\
\end{equation}
For unit vectors: $\tau_j:=\frac{w^t_j}{|w^t_j|} \in\mathbb{S}^{d-1}$ ($j=1,2,\cdots$),
let us define half spaces $H^\circ(\tau_j)$ and the corresponding boundaries
as follows:
\begin{equation*}
\begin{split}
H^\circ_j
&:=\left\{\sigma\in[-1,1)^d: \sigma\cdot\tau_j>-\frac{1}{2}\right\},\\
\partial H^\circ_j
&:=\left\{\sigma\in[-1,1)^d: \sigma\cdot\tau_j=-\frac{1}{2}\right\}.\\
\end{split}
\end{equation*}
Note that this $\tau_j$ is independent of $t$, which will be clarified 
in \eqref{key grad descent}.
Then we impose the following condition to the initial parameters:
\begin{equation}\label{initial condition 2}
\lim_{N\to\infty}\bigcap_{j=1}^N H^\circ_j=\Omega,
\end{equation}
where $\Omega:=\{x\in[-1,1)^d: |x|< 1/2\}$.
The main theorem is as follows:
\begin{theorem}\label{second main theorem}
Assume 
$W^{t=0}\in\mathbb{R}^{N\times d}$ satisfies 
\eqref{initial condition} and 
\eqref{initial condition 2}. 
Then, 
 $f_N(W^t)$ converges to $f^\circ_N$ pointwisely (as $t\to\infty$),
and $f^\circ_N$ converges to $f^\circ$ pointwisely (as $N\to\infty$). 
Moreover we have the following convergence rate:
\begin{equation*}
\|f_N(W^t)-f^\circ_N\|^r_{L^r}\lesssim t^{-1/3}
\quad\text{for}\quad 1\leq r<\infty, 
\end{equation*}
where $a\lesssim b$ means $ a \leq C b$ for some universal constant $C>0$.
Thus the answer of Question \ref{question} in this case is ``positive".
\end{theorem}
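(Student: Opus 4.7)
The first step is structural. For any $a,b\geq 0$ one has
\begin{equation*}
h\bigl(\tfrac{a+b}{2}\bigr)-h\bigl(\tfrac{a-b}{2}\bigr)-h\bigl(\tfrac{b-a}{2}\bigr)=\min(a,b),
\end{equation*}
so each even/odd layer pair \eqref{flipping}--\eqref{cancellation} computes the pairwise minimum of its two nonnegative inputs; iterating down the binary tree gives $z^{2n+1}_1=\min_{j=1,\ldots,2^n} h(w_j\cdot x+b_j)$ and hence $f_N(W^t,x)=\min\{\min_j h(w_j^t\cdot x+b_j^t),\,1\}$. Only the first-layer weights are trainable. Parametrizing $w_j^t=r_j^t\tau_j^t$ and using \eqref{b}, each unit is $h(r_j^t(\tau_j^t\cdot x+1/2)+1)$: if $x\in\mathrm{int}\bigcap_j H^\circ_j$ every unit is $\geq 1$ and $f_N=1$, while if some $\tau_j^t\cdot x+1/2<0$ and $r_j^t$ is large then that unit vanishes, forcing $f_N=0$. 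Granted that $r_j^t\to\infty$ and $\tau_j^t\equiv\tau_j^0$ (verified in the next step), this identifies $f_N^\circ=\mathbf{1}_{\bigcap_j H^\circ_j}$, and \eqref{initial condition 2} then gives $f_N^\circ\to f^\circ$ pointwise on $[-1,1)^d\setminus\rd\Omg$.

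The second step is to analyze the descent \eqref{usual grad descent}. Away from a null set the argmin index is unique and, on the slab $\tau_j^t\cdot x+1/2<0$, $f^\circ\equiv 0$ (since $(u-1/2)^2+|x_\perp|^2>1/4$ there), so
\begin{equation*}
\frac{\rd E}{\rd w_j^t}=\int_{R_j}f_N(x)\,(x+\tau_j^t/2)\,dx,
\end{equation*}
where $R_j:=\{x:j=\arg\min_i h(w_i^t\cdot x+b_i^t),\,0<z^{2n+1}_1<1\}$. Setting $u:=\tau_j^t\cdot x+1/2\in(-1/r_j^t,0)$, the region $R_j$ is a transverse slab of thickness $1/r_j^t$ about $\rd H^\circ_j$ cut by the Voronoi-type cell of $\tau_j^t$. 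Decomposing $x+\tau_j^t/2=u\tau_j^t+x_\perp$, the tangential integrand $f_N\,x_\perp=(r_j^t u+1)\,x_\perp$ is odd in $x_\perp$; under \eqref{initial condition 2} the cell $R_j$ is rotationally symmetric about the $\tau_j^t$-axis, so the tangential integral vanishes. Consequently $\rd_{w_j^t}E\parallel\tau_j^t$, i.e.\ $w_j^{t+1}\parallel w_j^t$, and $\tau_j^t\equiv\tau_j^0$ throughout the descent.

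For the rate, the normal component of the gradient satisfies
\begin{equation*}
\tau_j^t\cdot\frac{\rd E}{\rd w_j^t}\aeq |\Sgm_j|\int_{-1/r_j^t}^0(r_j^t u+1)\,u\,du\aeq -\frac{c}{(r_j^t)^2},
\end{equation*}
where $|\Sgm_j|$ is the (uniformly positive) cross-sectional area of $R_j$. Hence $r_j^{t+1}-r_j^t\aeq 1/(r_j^t)^2$ and $r_j^t\aeq t^{1/3}$. Since $f_N(W^t)-f_N^\circ$ is supported in the union of slabs $\{|\tau_j^t\cdot x+1/2|<1/r_j^t\}$ (total measure $\aleq 1/r_j^t$) and bounded by $1$, this yields $\|f_N(W^t)-f_N^\circ\|_{L^r}^r\aleq 1/r_j^t\aeq t^{-1/3}$, as claimed. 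The main obstacle is the symmetry argument of the second step: the tangential component of $\rd_{w_j}E$ vanishes literally only if $\{\tau_i\}$ is exactly reflection-symmetric about every hyperplane through $\tau_j$, which is a mean-field reading of \eqref{initial condition 2} rather than a property of a generic finite sample. Making this precise, together with controlling the cube boundary $\rd[-1,1)^d$, the nondifferentiability of $h$ and $\min$ on null sets, and the gap between the discrete step \eqref{usual grad descent} and the continuous-time ODE $\dot r\aeq r^{-2}$ that produces the $t^{1/3}$ law, is where the work concentrates.
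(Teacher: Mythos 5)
Your first step is genuinely cleaner than the paper's. You observe that for $a,b\geq 0$,
\begin{equation*}
h\bigl(\tfrac{a+b}{2}\bigr)-h\bigl(\tfrac{a-b}{2}\bigr)-h\bigl(\tfrac{b-a}{2}\bigr)=\tfrac{a+b}{2}-\tfrac{|a-b|}{2}=\min(a,b),
\end{equation*}
so the paired layers \eqref{flipping}--\eqref{cancellation} form a $\min$-tree and $z^{2n+1}_1=\min_j h(w_j\cdot x+b_j)$. The paper instead reaches the equivalent formula $f_N=\min\{\sum_j h(w_j\cdot x+b_j)\chi_{D_j},1\}$ by classifying regions $D_j$ through an inductive back-propagation of $\partial_x z^{2n+1}$ (\eqref{classification 0}--\eqref{spatial derivative} and Appendix~\ref{App:Appendix A}); your $D_j$ is exactly the argmin cell, so the two coincide, but your identity gets there in one line. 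From that point the two arguments run in parallel: both reduce the error to integrals over transition slabs of thickness $1/|w^t_j|$ lying just outside $\partial H^\circ_j$ (the paper's Proposition~\ref{front propagation}), both freeze the direction $\tau_j$ and update only $|w^t_j|$, both compare with the ODE $\dot g=g^{-2}$ to get $|w^t_j|\sim t^{1/3}$ and hence $\|f_N(W^t)-f^\circ_N\|^r_{L^r}\lesssim t^{-1/3}$.

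The gap you flag in your last paragraph is real, but it is also the same gap present in the paper: the paper merely asserts ``$E_j(W^t)$ does not have any rotational component,'' i.e.\ $\nabla_{w_j}E_{j'}=\frac{w_j}{|w_j|}\partial_{|w_j|}E_{j'}$, and then writes the scalar recursion \eqref{key grad descent} with $w^t_j=\frac{w^{t=0}_j}{|w^{t=0}_j|}|w^t_j|$ as if this were established; no proof is offered. Your reading of it as exact rotational symmetry of each argmin cell is indeed a mean-field idealization that a generic finite sample satisfying \eqref{initial condition 2} will not realize, so your proposal faithfully exposes, rather than resolves, the unjustified step. One further point your sketch elides: to get $\partial_{|w_j|}E_j<0$ you need to control the competing term coming from $\partial_{|w_j|}\lambda_{d-1}(\mathcal D_j(|w_j|,s))$, which can be positive as $|w_j|$ grows (it shrinks the excised sets $L_{j'}(|w_j|s/|w_{j'}|)$); the paper handles this with Corollary~\ref{a-priori bound} and the $\gamma-\epsilon-\delta\gamma^*>0$ balance, whereas you only invoke a ``uniformly positive cross-sectional area.'' That estimate, or an equivalent, would need to be added to close the rate argument.
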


\section{Proof of the main theorem.}

Let $x\in[-1,1)^d$. We set   $f_N^\circ(x)$ as follows:
\begin{equation*}
f^\circ_N(x):=
\begin{cases}
1,\quad x\in
\bigcap_{j=1}^{N}H^\circ_j,\\
0,\quad \text{otherwise}.
\end{cases}
\end{equation*}
\begin{remark}\label{covering}
By \eqref{initial condition 2}, we immediately have 
\begin{equation}\label{pointwise estimate}
f_N^\circ(x)\to f^\circ(x)\quad (N\to\infty)\quad\text{for any}\quad x\in[-1,1)^d.
\end{equation}
\end{remark}
Now we calculate the deep neural network.
First we see
\begin{equation*}
\partial_xz^1_j(x)=
\begin{cases}
w_j,\quad x\in D_{0,j},\\
0,\quad x\not\in D_{0,j},
\end{cases}
\end{equation*}
where $D_{0,j}:=\{x:z^1_j(x)>0\}$.
Next we calculate  a pair of $2k-1$, $2k$ and $2k+1$ layers. 
Taking a derivative, we have
\begin{equation*}
\begin{split}
\partial_{z_{2J-1}^{2k-1}}z^{2k+1}_J
=&\frac{1}{2}\partial h\left(\frac{1}{2}z^{2k-1}_{2J-1}+ \frac{1}{2}z^{2k-1}_{2J}\right)\\
&-\frac{1}{2}\partial h\left(\frac{1}{2}z^{2k-1}_{2J-1}-\frac{1}{2}z_{2J}^{2k-1}\right)
+\frac{1}{2}
\partial h\left(-\frac{1}{2}z^{2k-1}_{2J-1}+\frac{1}{2}z^{2k-1}_{2J}\right),
\end{split}
\end{equation*}
where 
\begin{equation*}
\partial h(x)=
\begin{cases}
1\quad x\geq 0,\\
0\quad x<0.
\end{cases}
\end{equation*}
Due to the cancellation of Heaviside functions in the following domain,
\begin{equation}\label{classification 0}
D^0_{k,J}:=\left\{x: 
\frac{1}{2}z^{2k-1}_{2J}(x)>\frac{1}{2}z^{2k-1}_{2J-1}(x)\right\},
\end{equation}
we have 
\begin{equation}
\partial_{z^{2k-1}_{2J-1}}z_J^{2k+1}(x)=1\quad\text{for}\quad x\in D^0_{k,J}.
\label{eq:partial_2J-1_part0}
\end{equation}
Note that, rigorously saying, $z^{2k-1}:=z^{2k-1}\circ z^{2k}\circ\cdots\circ z^1$.
To the contrary, there is no cancellation of Heaviside functions in the following domain: 
\begin{equation}\label{classification 1}
D^1_{k,J}:=\left\{x:
\frac{1}{2} z^{2k-1}_{2J}(x)<\frac{1}{2} z_{2J-1}^{2k-1}(x)\right\}.
\end{equation}
In other words, 
\begin{equation}
\partial_{z^{2k-1}_{2J-1}}z^{2k+1}_J(x)=0\quad\text{for}\quad x\in D^1_{k,J}.
\label{eq:partial_2J-1_part1}
\end{equation}
The same argument goes through also in the case $\partial_{z^{2k-1}_{2J}}z_J^{2k+1}$.
In this case, 
 we have
\begin{align}
\partial_{z^{2k-1}_{2J}}z^{2k+1}_J(x)&=0\quad\text{for}\quad x\in D^0_{k,J}, \label{eq:partial_2J_part0}\\
\partial_{z^{2k-1}_{2J}}z^{2k+1}_J(x)&=1\quad\text{for}\quad x\in D^1_{k,J}.\label{eq:partial_2J_part1}
\end{align} 
We apply these properties inductively in the reverse direction (as the back propergation), and 
we  divide the non-zero region $\{x: f_N(W^t,x)>0\}$ into several parts appropriately.
To do that,
we suitably rewrite 
the natural number $j\in\{1,2,\cdots, 2^{n}\}$
as follows:
\begin{equation*}
j=1+
\sum_{k=1}^{n} 2^{k-1}\delta_n^k(j)
\end{equation*}
where $\delta_n^k(j)\in\{0,1\}$.
 Let
\begin{equation*}
D_j:= D_{0,j}\cap \left(\bigcap_{k=1}^n D_{k,J^k_n(j)}^{\delta_n^k(j)}\right)
\end{equation*}
for
\begin{equation*}
\begin{cases}
J^k_n(j):=1+\sum_{\ell=k+1}^{n}2^{\ell-k-1}\delta^\ell_{n}(j),\quad (k\leq  n-1),\\
J^n_n(j)=1.
\end{cases}
\end{equation*}
For the derivation of this $D_j$, see Appendix \ref{App:Appendix A}.
For example, if $k=n-1$, we see
\begin{equation*}
J_n^{n-1}(j)=1+\delta^n_n(j),
\end{equation*}
if $k=n-2$, we see
\begin{equation*}
J_n^{n-2}(j)=1+\sum_{\ell=n-1}^n2^{\ell-n+1}\delta_n^\ell(j)=1+\delta_n^{n-1}(j)+2\delta_n^n(j).
\end{equation*}
Then we can easily deduce the other cases: $k\leq n-3$.
By using this $D_j$, the derivative formula becomes much simpler:
\begin{equation}\label{spatial derivative}
\partial_xz^{2n+1}(x)=w_j\quad\text{for}\quad x\in 
D_j.
\end{equation}
By the fundamental theorem of calculus,
we have 
\begin{equation*}
z^{2n+1}(x)=\sum_{j=1}^{2^n}\left(h(w_j\cdot x+b_j) \chi_{D_j}(x)
\right).
\end{equation*}
Therefore we obtain the following explicit formula:
\begin{equation}\label{clipped}
f_N(x)=\min\left\{\sum_{j=1}^N\left(h(w_j\cdot x+b_j)
\chi_{D_j}(x)
\right),1\right\}.
\end{equation}
In order to calculate the error function $E(W^t)$, we need to
figure out the rigion $D_j\cap \{x:0<f_N(x)<1\}$ more precisely.
\begin{proposition}\label{front propagation}
We have 
\begin{equation}\label{key estimate}
D_j\cap \{x:0<f_N(x)<1\}=\bigcup_{r\in\left(-\frac{1}{|w^t_j|},0\right)}\mathcal D_j(|w^t_j|, r), 
\end{equation}
where
\begin{equation*}
\begin{split}
\mathcal D_j(|w^t_j|,r)
&:=\partial L_j(r)\setminus \bigcup_{\stackrel{j'=1,2,\cdots, N,}{j'\not=j}} L_{j'}\left(\frac{|w^t_j|r}{|w^t_{j'}|}\right),\\
L_j(r)
&:=\left\{r'\tau_j +x:\ r'\in (-\infty,r]\quad\text{and}\quad x\in \partial H^\circ_j\right\},\\
\partial L_j(r)
&:=\left\{r\tau_j+x:\ x\in \partial H^\circ_j\right\}.
\end{split}
\end{equation*}
Moreover, for any sufficiently small $\epsilon>0$ there exists a $\delta>0$ such that if
$r<\delta$ and  $|w^t_j|^{-1}<\delta$, then
\begin{equation}\label{Jacobian}
\lambda_{d-1}(\mathcal D_j(|w^t_j|, r))>\gamma-\epsilon
\quad\text{with}\quad
\gamma:=\min_{j=1,2,\cdots,N}\lambda_{d-1}(\mathcal D_j(|w^t_j|,0)),
\end{equation}
where $\lambda_d(\cdot)$ is the $d$ dimensional Lebesgue measure.
Note that $\mathcal D_j(|w^t_j|,0)$ is independent of $|w^t_j|$.
\end{proposition}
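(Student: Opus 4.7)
The plan is to first recognize the specific deep network as a tree of binary min-gates, then use this min-structure to describe $D_j\cap\{0<f_N<1\}$ slice-by-slice as an argmin region, and finally bound the slice measure by a translation-continuity argument that exploits the largeness condition \eqref{initial condition}.

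First, I would show that the pair of layers \eqref{flipping}--\eqref{cancellation} implements a binary minimum. Since every hidden output is the image of a ReLU, $u:=z^{2k-1}_{2J-1}$ and $v:=z^{2k-1}_{2J}$ are nonnegative, and a direct case split on $\sgn(u-v)$ gives $z^{2k+1}_J=\min\{u,v\}$. Iterating up the depth-$n$ binary tree yields $z^{2n+1}_1(x)=\min_{1\leq j\leq 2^n}h(w_j^t\cdot x+b_j)$, hence $f_N(W^t,x)=\min\{\min_j h(w_j^t\cdot x+b_j),\,1\}$. The partition cell $D_j$ defined via the $\delta_n^k(j)$ and $J_n^k(j)$ then coincides, modulo a nullset, with the argmin region $\{x: h(w_j^t\cdot x+b_j)=\min_{j'}h(w_{j'}^t\cdot x+b_{j'})\}$, so on $D_j$ we have $f_N(W^t,x)=\min\{h(w_j^t\cdot x+b_j),1\}$.

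For the set identity \eqref{key estimate}, using $b_j=|w_j^t|/2+1$, the condition $0<h(w_j^t\cdot x+b_j)<1$ is equivalent to $r:=\tau_j\cdot x+1/2\in(-1/|w_j^t|,0)$, so the level set $\{\tau_j\cdot x+1/2=r\}$ is precisely $\partial L_j(r)$. On this slice, the condition that $j$ beats $j'$ in the argmin reads $|w_j^t|r+1\leq|w_{j'}^t|(\tau_{j'}\cdot x+1/2)+1$, i.e.\ $\tau_{j'}\cdot x+1/2\geq|w_j^t|r/|w_{j'}^t|$, which is exactly $x\notin L_{j'}(|w_j^t|r/|w_{j'}^t|)$. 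Intersecting over $j'\neq j$ and taking the union in $r$ yields \eqref{key estimate}.

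For the measure bound \eqref{Jacobian}, I compare $\mathcal{D}_j(|w_j^t|,r)$ to $\mathcal{D}_j(|w_j^t|,0)$. The carrier hyperplane $\partial L_j(r)$ is a translate of $\partial L_j(0)$ by $r\tau_j$, and each excised half-space $L_{j'}(|w_j^t|r/|w_{j'}^t|)$ is the translate of $L_{j'}(0)$ by $(|w_j^t|r/|w_{j'}^t|)\tau_{j'}$. With $r\in(-1/|w_j^t|,0)$ and \eqref{initial condition}, every such translation has magnitude at most $\delta$, uniformly in $j$ and $j'$. Since $\mathcal{D}_j(|w_j^t|,0)$ is a relatively open subset of the polytope face $\partial H_j^\circ\cap\bigcap_{j'}\overline{H_{j'}^\circ}$ with piecewise-linear boundary, parallel shifts of its bounding hyperplanes by $O(\delta)$ change its $(d-1)$-measure by at most $O(\delta)$ times the $(d-2)$-content of $\partial\mathcal{D}_j(|w_j^t|,0)$; choosing $\delta$ small enough absorbs this into $\epsilon$. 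The independence of $\mathcal{D}_j(|w_j^t|,0)$ from $|w_j^t|$ is immediate since only the $\tau_j$'s (not magnitudes) enter at $r=0$.

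The main obstacle is the quantitative continuity step, uniformly over $j$ and across the potentially $N-1$ excision hyperplanes. This is controlled by the piecewise-linear polytope structure together with the uniform magnitude bound $\delta$ given by \eqref{initial condition}; since the total $(d-2)$-content of the face boundary is bounded by that of $\partial([-1,1)^d)$ independently of $N$, the aggregate loss is $O(\delta)$ and can be absorbed into $\epsilon$.
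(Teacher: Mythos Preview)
Your argument is correct, and it is considerably cleaner than the paper's. The key observation---that the pair of layers \eqref{flipping}--\eqref{cancellation} realizes $z^{2k+1}_J=\min\{z^{2k-1}_{2J-1},z^{2k-1}_{2J}\}$ on nonnegative inputs, so that $z^{2n+1}_1=\min_j h(w_j^t\cdot x+b_j)$ and $D_j$ is exactly the strict-argmin cell---bypasses entirely the paper's layer-by-layer forward induction. The paper never states this $\min$ identity; instead it tracks the sets $\tilde D^{0/1}_{k,J}$ through successive layers (illustrated only for groups of four leaves $4J,4J{-}1,4J{-}2,4J{-}3$) and verifies \eqref{key estimate} by repeating the case analysis inductively. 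Your route reaches the same conclusion in one step once the $\min$ structure is recognized, and it makes transparent why the excised half-spaces $L_{j'}(|w_j^t|r/|w_{j'}^t|)$ appear: they encode precisely the comparisons $z^1_j<z^1_{j'}$. One small slip: in your argmin inequality you write $\leq$, but the paper's $D^{0}_{k,J},D^{1}_{k,J}$ are defined with strict inequalities, which matches the definition of $L_{j'}$ via $r'\in(-\infty,r]$ so that $x\notin L_{j'}(\cdot)$ is already strict; this is cosmetic.

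For \eqref{Jacobian}, the paper says only ``we just apply the continuity argument'' and records \eqref{continuity} without further detail; your translation-continuity argument is therefore more explicit than the original. Note, however, that the uniformity-in-$N$ concern you raise in the final paragraph is not actually needed: the proposition is stated for fixed $N$, the constant $\gamma$ already depends on the configuration $\{\tau_j\}_{j=1}^N$, and $\delta$ is allowed to depend on $\epsilon$ and $N$. So the potentially delicate bound on the aggregate $(d{-}2)$-content across $N{-}1$ hyperplanes can be absorbed into the choice of $\delta$, and your earlier, simpler continuity step (each bounding half-space shifts by at most $\delta$, and there are finitely many of them) already suffices.
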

\begin{proof}
We just 
replace $z^1_j$ to $\partial L_j(r)$ and apply the induction argument again (in the forward direction).
Here we only consider a pair of  $4J$, $4J-1$, $4J-2$ and $4J-3$. 
First we see that
\begin{equation*}
\begin{cases}
\displaystyle\bigcup_{r\in\left(-\frac{1}{|w^t_{4J}|}, 0\right)}\partial L_{4J}(r)= \tilde D_{0,4J},\\
\displaystyle\bigcup_{r\in\left(-\frac{1}{|w^t_{4J}|}, 0\right)}\partial L_{4J-1}\left(\frac{|w^t_{4J}|r}{|w^t_{4J-1}|}\right)= \tilde D_{0,4J-1},\\
\displaystyle\bigcup_{r\in \left(-\frac{1}{|w^t_{4J}|}, 0\right)}\partial L_{4J-2}\left(\frac{|w^t_{4J}|r}{|w^t_{4J-2}|}\right)= \tilde D_{0,4J-2},\\
\displaystyle\bigcup_{r\in \left(-\frac{1}{|w^t_{4J}|}, 0\right)}\partial L_{4J-3}\left(\frac{|w^t_{4J}|r}{|w^t_{4J-3}|}\right)= \tilde D_{0,4J-3},
\end{cases}
\end{equation*}
where $\tilde D_{0, 4J-K}=D_{0,4J-K}\cap\{x:0<z^1_{4J-K}(x)< 1\}$.
 The corresponding functions are $z^1_{4J}$, $z^1_{4J-1}$, $z^1_{4J-2}$ and $z^1_{4J-3}$ respectively.
In fact, for $\sigma\in \partial H^\circ_{4J-K}$ ($K=0,1,2,3,$), we see
\begin{equation*}
\begin{split}
z^1_{4J-K}\left(\frac{|w^t_{4J}|}{|w^t_{4J-K}|}r\tau_{4J-K}+\sigma\right)
&=
h\left(w^t_{4J-K}\cdot\left(\frac{|w^t_{4J}|}{|w^t_{4J-K}|}r\tau_{4J-K}+\sigma\right)+b_{4J-K}\right)\\
&=
h\left(|w^t_{4J}|r-\frac{|w^t_{4J-K}|}{2}+b_{4J-K}\right)\\
&=
h(|w^t_{4J}|r+1).
\end{split}
\end{equation*}
Note that $\tau_j\cdot\tau_j=1$ and
$w^t_j\cdot\sigma=-|w^t_j|/2$ for $\sigma\in\partial H^\circ_j$.
Thus we have 
\begin{equation*}
z^1_{4J-K}\left(\frac{|w^t_{4J}|}{|w^t_{4J-K}|}r\tau_{4J-K}+\sigma\right)\in(0,1)
\quad\text{if and only if}\quad r\in \left(-\frac{1}{|w^t_{4J}|},0\right).
\end{equation*}
By \eqref{classification 0} and \eqref{classification 1}, we see
\begin{equation*}
\begin{cases}
\displaystyle\bigcup_{r\in\left(-\frac{1}{|w^t_{4J}|}, 0\right)}\left(\partial L_{4J}(r)\setminus L_{4J-1}\left(\frac{|w^t_{4J}|r}{|w^t_{4J-1}|}\right)\right)= \tilde D^1_{1,2J},\\
\displaystyle\bigcup_{r\in\left(-\frac{1}{|w^t_{4J}|}, 0\right)}\left(\partial L_{4J-1}\left(\frac{|w^t_{4J}|r}{|w^t_{4J-1}|}\right)\setminus L_{4J}(r)\right)= \tilde D^0_{1,2J},\\
\end{cases}
\end{equation*}
\begin{equation*}
\begin{cases}
\displaystyle\bigcup_{r\in\left(-\frac{1}{|w^t_{4J}|}, 0\right)}
\left(\partial L_{4J-2}\left(\frac{|w^t_{4J}|r}{|w^t_{4J-2}|}\right)\setminus L_{4J-3}\left(\frac{|w^t_{4J}|r}{|w^t_{4J-3}|}\right)\right)= \tilde D^1_{1,2J-1},\\
\displaystyle\bigcup_{r\in\left(-\frac{1}{|w^t_{4J}|}, 0\right)}
\left(\partial L_{4J-3}\left(\frac{|w^t_{4J}|r}{|w^t_{4J-3}|}\right)\setminus L_{4J-2}\left(\frac{|w^t_{4J}|r}{|w^t_{4J-2}|}\right)\right)=\tilde D^0_{1,2J-1},
\end{cases}
\end{equation*}
where $\tilde D^{0\text{ or }1}_{2,2J-K}=D^{0\text{ or }1}_{1,2J-K}\cap \{x: 0<z^3_{2J-K}(x)< 1\}$.
The corresponding functions are unified as $z^3_{2J}$ for the first two cases,
$z^3_{2J-1}$ for the second two cases. 
In fact,
 for $r\in (-1/|w^t_{4J}|,0)$, we see
\begin{equation*}
\begin{split}
&x\in\partial L_{4J}(r)\setminus L_{4J-1}\left(\frac{|w^t_{4J}|}{|w^t_{4J-1}|}r\right)\\
\stackrel{def}{\Longleftrightarrow}\ 
&x\in \partial L_{4J}(r)\quad\text{and}\quad x\in \left(L_{4J-1}\left(\frac{|w^t_{4J}|}{|w^t_{4J-1}|}r\right)\right)^c\\
&(\text{note that}\ 
\left(L_j(r)\right)^c=\left\{r'\tau_j+x:r'\in(r,\infty)\quad\text{and}\quad x\in\partial H^\circ_j\right\}),\\
\stackrel{def}{\Longleftrightarrow}\ 
&
\begin{cases}
x=r\tau_{4J}+\sigma\quad\text{for}\quad \sigma\in\partial H^\circ_{4J}\\
x=r'\tau_{4J-1}+\sigma'\quad\text{for}\quad r'>(|w^t_{4J}|/|w^t_{4J-1}|)r
\quad\text{and}\quad \sigma'\in \partial H^\circ_{4J-1},
\end{cases}
\end{split}
\end{equation*}
and thus we have
\begin{equation*}
\begin{split}
z^1_{4J}(x)=h(w_{4J}x+b_{4J})
&=h(|w^t_{4J}|r+1)\\
&<h(|w^t_{4J-1}|r'+1)=h(w_{4J-1}x+b_{4J-1})=z^1_{4J-1}(x).
\end{split}
\end{equation*}
Hence, $x \in \tilde D^1_{2,2J-1}$.
By using this inequality (and the opposite version), \eqref{flipping} and \eqref{cancellation}, we have $z^3_{2J}(x)\in (0,1)$ if and only if
\begin{equation*}
x\in
 \bigcup_{r\in\left(-\frac{1}{|w^t_{4J}|}, 0\right)}\left(
\partial L_{4J}(r)\setminus L_{4J-1}\left(\frac{|w^t_{4J}|}{|w^t_{4J-1}|}r\right)\right)= \tilde D^1_{1,2J}
\end{equation*}
or
\begin{equation*}
x\in
 \bigcup_{r\in\left(-\frac{1}{|w^t_{4J}|}, 0\right)}\left(
 \partial L_{4J-1}\left(\frac{|w^t_{4J}|}{|w^t_{4J-1}|}r\right)\setminus L_{4J}(r)\right)= \tilde D^0_{1,2J}.
\end{equation*}
The other cases: $\tilde D^1_{1,2J-1}$ and $\tilde D^0_{1,2J-1}$ are similar, thus we omit these cases. 
Again, by  applying \eqref{classification 0} and \eqref{classification 1} with the same argument as the right above, we see (omit variables)
\begin{equation*}
\begin{cases}
\displaystyle\bigcup_{r\in\left(-\frac{1}{|w^t_{4J}|}, 0\right)}\partial L_{4J}\setminus \left(L_{4J-1}
\cup L_{4J-2}
\cup L_{4J-3}
\right)
= \tilde D^1_{2,J},\\
\displaystyle\bigcup_{r\in\left(-\frac{1}{|w^t_{4J}|}, 0\right)}\partial L_{4J-1}
\setminus \left(L_{4J}
\cup L_{4J-2}
\cup L_{4J-3}
\right)
= \tilde D^1_{2,J},\\
\displaystyle\bigcup_{r\in\left(-\frac{1}{|w^t_{4J}|}, 0\right)}\partial  L_{4J-2}
\setminus \left(L_{4J}
\cup L_{4J-1}
\cup L_{4J-3}
\right)
= \tilde D^0_{2,J},\\
\displaystyle\bigcup_{r\in\left(-\frac{1}{|w^t_{4J}|}, 0\right)}
\partial L_{4J-3}
\setminus \left(L_{4J}
\cup L_{4J-1}
\cup L_{4J-2}
\right)
= \tilde D^0_{2,J},\\
\end{cases}
\end{equation*}
where $\tilde D^{0\text{ or }1}_{2,J}=D^{0\text{ or }1}_{2,J}\cap \{x: 0<z^3_{J}(x)<1\}$.
The corresponding functions are unified as $z^3_J$.
Repeating this argument up to $z^{2n+1}_1$, we have \eqref{key estimate}.
To prove \eqref{Jacobian}, 
 we just apply
the continuity argument: For any sufficiently small $\epsilon>0$, there is a $\delta>0$ such that if 
 $r<\delta$ and $|w^t_j|^{-1}<\delta$, we have 
\begin{equation}\label{continuity}
\lambda_{d-1}(\mathcal D_j(|w^t_j|,r))>\gamma-\epsilon\quad\text{for}\quad 
j=1,2,\cdots, N.
\end{equation}
\end{proof}
\begin{corollary}\label{a-priori bound}
We immediately have
\begin{equation*}
\left|\partial_{|w^t_j|}\lambda_{d-1}(\mathcal D_j(|w^t_j|,r))\right|\lesssim \left|\frac{r}{|w^t_{j'}|}\partial_R\lambda_{d-1}(\mathcal D_j^*(R,r))\right|\lesssim |r|\delta\gamma^*,
\end{equation*}
where 
\begin{equation*}
\mathcal{D}^*_j(R,r)
:=\partial L_j(r)\setminus\bigcup_{\stackrel{j'=1,2,\cdots,N,}{j'\not=j}}L_{j'}(R)
\end{equation*}
and $\gamma^*>0$ is a-priori constant (independent of $|w^t_j|$) such that
\begin{equation*}
\gamma^*
:=\sup_{j=1,2,\cdots, N}\sup_{0<r,R<\delta}\left|\frac{d}{dR}\lambda_{d-1}
\left(\mathcal D^*_j(R,r)\right)\right|.\\
\end{equation*}
\end{corollary}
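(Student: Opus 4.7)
The plan is to apply the chain rule to the parametric representation of $\mathcal D_j(|w^t_j|,r)$ coming from Proposition~\ref{front propagation} and then invoke \eqref{initial condition} together with the definition of $\gamma^*$ to collapse everything into explicit constants.

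First I would isolate the $|w^t_j|$-dependence. In the definition of $\mathcal D_j(|w^t_j|,r)$, the parameter $|w^t_j|$ enters only through the arguments $R_{j'}:=|w^t_j|r/|w^t_{j'}|$ supplied to each excluded half-space $L_{j'}$ ($j'\not=j$). Introducing the auxiliary set $\mathcal E_j(\{R_{j'}\},r):=\partial L_j(r)\setminus \bigcup_{j'\not=j} L_{j'}(R_{j'})$, so that $\mathcal D_j(|w^t_j|,r)$ is the specialization $R_{j'}=|w^t_j|r/|w^t_{j'}|$ and $\mathcal D^*_j(R,r)$ is the further specialization $R_{j'}\equiv R$, the ordinary chain rule gives
$$
\partial_{|w^t_j|}\lambda_{d-1}(\mathcal D_j(|w^t_j|,r))
=\sum_{j'\not=j}\frac{r}{|w^t_{j'}|}\,
\partial_{R_{j'}}\lambda_{d-1}(\mathcal E_j)\Big|_{R_{j'}=|w^t_j|r/|w^t_{j'}|}.
$$
Each single-coordinate partial is controlled by $|\partial_R\lambda_{d-1}(\mathcal D^*_j(R,r))|$ (perturbing one face is dominated by the coincident perturbation of all faces), which is exactly the first $\lesssim$ in the statement; the sum over $j'$ is absorbed into the hidden constant, consistent with the index $j'$ appearing unquantified on the right.

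The second $\lesssim$ is then immediate: by \eqref{initial condition} one has $|w^t_{j'}|^{-1}<\delta$, hence $|r|/|w^t_{j'}|\leq|r|\delta$; and by the very construction of $\gamma^*$, $|\partial_R\lambda_{d-1}(\mathcal D^*_j(R,r))|\leq\gamma^*$ uniformly in $j$ over the prescribed range $0<r,R<\delta$. Multiplying the two bounds yields $|r|\delta\gamma^*$.

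The only real obstacle is justifying the smooth dependence of the $(d{-}1)$-dimensional boundary measure on the offsets $R_{j'}$, so that the chain rule above is legitimate in the first place. This should follow from the transversality already tacit in the continuity step \eqref{continuity} of Proposition~\ref{front propagation}: for $|r|<\delta$ and $|w^t_j|^{-1}<\delta$, each hyperplane $\partial L_{j'}$ meets $\partial L_j(r)$ transversally, so the area cut out varies smoothly in the $R_{j'}$'s, ensuring both the differentiability used above and the finiteness of the supremum defining $\gamma^*$.
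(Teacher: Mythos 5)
The paper provides no proof of this corollary---it is asserted with ``We immediately have''---so your task is genuinely to reconstruct an omitted argument, and the chain-rule decomposition through the offsets $R_{j'}=|w^t_j|r/|w^t_{j'}|$ is certainly the right skeleton for that. However, the way you close the sum over $j'$ introduces a real gap. You bound \emph{each} term $\frac{|r|}{|w^t_{j'}|}|\partial_{R_{j'}}\lambda_{d-1}(\mathcal E_j)|$ individually by $\frac{|r|}{|w^t_{j'}|}|\partial_R\lambda_{d-1}(\mathcal D^*_j)|$ and then declare the sum ``absorbed into the hidden constant.'' But the sum has up to $N-1$ terms, so the hidden constant would scale with $N$; the paper explicitly defines $\lesssim$ in terms of a \emph{universal} constant, and in the subsequent estimate $(RHS)\lesssim -(\gamma-\epsilon-\delta\gamma^*)/|w^t_j|^2$ the quantity $\delta\gamma^*$ must not secretly carry an $N$-factor, otherwise the sign argument needs re-examination. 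The $N$-factor is in fact avoidable, but only by exploiting the structural feature that all the partials $\partial_{R_{j'}}\lambda_{d-1}(\mathcal E_j)$ have the same sign (each is non-positive, since enlarging $L_{j'}$ can only shrink $\mathcal E_j$): once you note this, $\sum_{j'}\frac{|r|}{|w^t_{j'}|}|\partial_{R_{j'}}\lambda_{d-1}(\mathcal E_j)|\leq\bigl(\max_{j'}\frac{|r|}{|w^t_{j'}|}\bigr)\bigl|\sum_{j'}\partial_{R_{j'}}\lambda_{d-1}(\mathcal E_j)\bigr|\leq|r|\delta\cdot\bigl|\sum_{j'}\partial_{R_{j'}}\lambda_{d-1}(\mathcal E_j)\bigr|$, with no $N$ appearing. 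Your proof should make this sign observation explicit rather than burying the sum in a constant.

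There is a second, subtler issue that neither your write-up nor the paper confronts: after the above step you are left with $\bigl|\sum_{j'}\partial_{R_{j'}}\lambda_{d-1}(\mathcal E_j)\bigr|$ evaluated at the \emph{non-diagonal} point $R_{j'}=|w^t_j|r/|w^t_{j'}|$, which varies with $j'$, whereas $\gamma^*$ is defined as a supremum of $\bigl|\tfrac{d}{dR}\lambda_{d-1}(\mathcal D^*_j(R,r))\bigr|$ over the \emph{diagonal} slice $R_{j'}\equiv R$. These are not a priori the same quantity, so concluding $\leq\gamma^*$ requires either extending the supremum in the definition of $\gamma^*$ to the full box $\{|R_{j'}|<\delta\}$, or an additional monotonicity/continuity argument that the off-diagonal evaluation is controlled by the diagonal one. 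You gesture at transversality to justify differentiability, which is reasonable, but you should spell out how that same transversality (or the uniform control in \eqref{continuity}) closes the diagonal-versus-off-diagonal discrepancy, since that is the step that actually delivers $\gamma^*$ as stated.
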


By \eqref{b}, we have that 
\begin{equation}\label{alpha-beta}
\begin{split}
w_j^t\cdot x+b^t_j
=
|w_j^t|s-\frac{|w_j^t|}{2}+ b_j^t=|w_j^t|s+1
\end{split}
\end{equation}
for $x=s\tau_j+\sigma\in D_j$ ($\sigma\in \partial H^\circ_j$) and $t=0$.
Then applying Proposition \ref{front propagation}, we can explicitly write down the error function $E(W^t)$ as follows (for $t=0$):
\begin{equation*}
\begin{split}
E(W^t)
&
=
\frac{1}{2}
\sum_{j=1}^{2^n}
E_j(W^t)+ \left|\bigcap_{j=1}^N H^\circ_j\setminus \Omega\right|\\
&:=
\frac{1}{2
}
\sum_{j=1}^{2^n}
\int_{-\frac{1}{|w^t_j|}}^0\left(|w^{t}_j|s+1\right)^2\lambda_{d-1}(\mathcal D_j(|w^t_j|,s))ds
+ \left|\bigcap_{j=1}^N H^\circ_j\setminus \Omega\right|.
\end{split}
\end{equation*}
Recall that $\Omega=\{x\in[-1,1)^d: |x|<1/2\}$.
We emphasize that $E_j(W^t)$ does not have any rotational component $\frac{w^t_j}{|w^t_j|}$,
namely, for any $j,j'\in \{1,2,\cdots, N\}$, we see that
\begin{equation*}
\nabla_{w_j}E_{j'}(W^t)=\frac{w_j}{|w_j|}\cdot\partial_{|w_j|}E_{j'}(W^t).
\end{equation*}
Applying Corollary \ref{a-priori bound}, we have 
\begin{equation*}
\begin{split}
\partial_{|w^t_j|}E_j(W^t)
&=-(|w^t_j|s+1)^2\lambda_{d-1}(\mathcal D_j(|w^t_j|,s))\bigg|_{s=-\frac{1}{|w^t_j|}}\times\frac{1}{|w^t_j|^2}
\\
&\quad 
+
\int_{-\frac{1}{|w^t_j|}}^0\partial_{|w^t_j|}\left(\left(|w^{t}_j|s+1\right)^2\right)\lambda_{d-1}(\mathcal D_j(|w^t_j|,s))ds\\
&\quad 
+
\int_{-\frac{1}{|w^t_j|}}^0\left(|w^{t}_j|s+1\right)^2\partial_{|w^t_j|}\lambda_{d-1}(\mathcal D_j(|w^t_j|,s))ds\\
&\leq
\int_{-\frac{1}{|w^t_j|}}^02s(|w^t_j|s+1)\lambda_{d-1}(\mathcal D_j(|w^t_j|,s))ds\\
\\&\quad
+\delta\gamma^*\left|\int_{-\frac{1}{|w^t_j|}}^0\left(|w^{t}_j|s+1\right)^2sds\right|\\
&=:(RHS).\\
\end{split}
\end{equation*}
By \eqref{continuity} and change of variable: $r=|w^t_j|s+1$, we see
\begin{equation*}
\begin{split}
&\int_{-\frac{1}{|w^t_j|}}^02s(|w^t_j|s+1)\lambda_{d-1}(\mathcal D_j(|w^t_j|,s))ds\\
=&
\int_{0}^1\frac{2(r-1)r}{|w^t_j|^2}\lambda_{d-1}\left(\mathcal D_j\left(|w^t_j|,\frac{r-1}{|w^t_j|}\right)\right)dr\\
&\lesssim -\frac{\gamma-\epsilon}{|w^t_j|^2}<0.
\end{split}
\end{equation*}
Also by the similar calculation, we see
\begin{equation*}
\left|\int_{-\frac{1}{|w^t_j|}}^0(|w^t_j|r+1)^2rdr\right|
\gtrsim \frac{1}{|w^t_j|^2}.
\end{equation*}
Therefore we have 
\begin{equation*}
(RHS)\lesssim -\frac{\gamma-\epsilon-\delta\gamma^*}{|w^t_j|^2}\lesssim -\frac{1}{|w^t_j|^2}<0.
\end{equation*}
On the other hand, for $j'\not=j$, we see
\begin{equation*}
\partial_{|w^t_{j}|}E_{j'}(W^t)
=
\int_{-\frac{1}{|w^t_{j'}|}}^0\left(|w^{t}_{j'}|s+1\right)^2\partial_{|w^t_j|}\lambda_{d-1}(\mathcal D_{j'}(|w^t_{j}|,s))ds.
\end{equation*}
By the definition,
 we can deduce that 
\begin{equation*}
\mathcal D_{j'}(|w^t_j|+\epsilon,s)\subset \mathcal D_{j'}(|w^t_j|,s)
\end{equation*} 
for any sufficiently small $\epsilon>0$.
 This means that
\begin{equation*}
\partial_{|w_j|}\lambda_{d-1}(\mathcal D_{j'}(|w^t_j|,s))\leq 0\quad \text{and then}\quad 
\partial_{|w^t_j|}E_{j'}(W^t)\leq 0.
\end{equation*}
Thus we have the following simplified version of gradient descent inductively:
\begin{equation}\label{key grad descent}
\begin{split}
|w^{t+1}_j|=
&|w^{t}_j|-\frac{\partial}{\partial |w^{t}_j|} E_j(W^t)-\sum_{\stackrel{j'\in\{1,2,\cdots, N\},}{j'\not=j}}\frac{\partial}{\partial |w_j^t|}E_{j'}(W^t)\gtrsim |w_j^t|
+\frac{1}{|w^t_j|^2}\\
\quad\text{with}\quad
w^{t}_j=
&\frac{w^{t=0}_j}{|w^{t=0}_j|}|w^{t}_j|.
\end{split}
\end{equation}
By directly solving the ODE: $\frac{d}{dt}g(t)=1/g(t)^2$, applying the mean-value theorem and the comparison principle, we have 
$|w^{t}_j|\lesssim t^{1/3}$.
 This guarantees pointwise convergence, and 
 we finally
have the following desired estimate:
\begin{equation*}
\|f_N(W^t)-f^\circ_N\|_{L^r}^r
\lesssim t^{-1/3}.
\end{equation*}


\appendix

\section{}
\label{App:Appendix A}
Here, we derive \eqref{spatial derivative} with precise computation. First, differentiation of $z^{2n+1}_1$ ($J=1$) is expressed in the following equality:
\begin{align*}
\frac{\partial z^{2n+1}_1}{\partial x} &= 
\frac{\partial z^{2n+1}_1}{\partial z_1^{2n-1}} 
\frac{\partial z_1^{2n-1} }{\partial x} 
+
\frac{\partial z^{2n+1}_1}{\partial z_2^{2n-1}} 
\frac{\partial z_2^{2n-1}}{\partial x} 
\\&=
\begin{cases}
\frac{\partial z_1^{2n-1} }{\partial x}  &\text{if}\quad  x \in D_{n,1}^0,\\
\frac{\partial z_2^{2n-1} }{\partial x}  &\text{if}\quad  x \in D_{n,1}^1,
\end{cases}
\end{align*}
where we used equalities \eqref{eq:partial_2J-1_part0}-\eqref{eq:partial_2J_part1} for the second equation.
Next, by applying  differentiation of $z^{2n-1}$, we estimate
\begin{align*}
\frac{\partial z^{2n+1}_1}{\partial x} &=
\begin{cases}
\frac{\partial z_1^{2n-1}}{\partial z_1^{2n-3}} 
\frac{\partial z_1^{2n-3} }{\partial x} 
+
\frac{\partial z_1^{2n-1}}{\partial z_2^{2n-3}} 
\frac{\partial z_2^{2n-3}}{\partial x}  &\text{if}\quad x \in D_{n,1}^0\\
\frac{\partial z_2^{2n-1}}{\partial z_3^{2n-3}} 
\frac{\partial z_3^{2n-3} }{\partial x} 
+
\frac{\partial z_2^{2n-1}}{\partial z_4^{2n-3}} 
\frac{\partial z_4^{2n-3}}{\partial x} &\text{if}\quad x \in D_{n,1}^1
\end{cases}
\\&=
\begin{cases}
\frac{\partial z_1^{2n-3} }{\partial x}  &\text{if}\quad  x \in D_{n,1}^0\cap D_{n-1,1}^0,\\
\frac{\partial z_2^{2n-3} }{\partial x}  &\text{if}\quad  x \in D_{n,1}^0\cap D_{n-1,1}^1,\\
\frac{\partial z_3^{2n-3} }{\partial x}  &\text{if}\quad  x \in D_{n,1}^1\cap D_{n-1,2}^0, \\
\frac{\partial z_4^{2n-3} }{\partial x}  &\text{if}\quad  x \in D_{n,1}^1\cap D_{n-1,2}^1.
\end{cases}
\end{align*}
Again, by applying  differentiation of $z^{2n-2}$, we estimate
\begin{align*}
\frac{\partial z^{2n+1}_1}{\partial x} &=
\begin{cases}
\frac{\partial z_1^{2n-3} }{\partial z^{2n-5}_1}\frac{\partial z^{2n-5}_1}{\partial x}
+
\frac{\partial z_1^{2n-3} }{\partial z^{2n-5}_2}\frac{\partial z^{2n-5}_2}{\partial x}
  &\text{if}\quad  x \in D_{n,1}^0\cap D_{n-1,1}^0\\
\frac{\partial z_2^{2n-3} }{\partial z^{2n-5}_3}
\frac{\partial z^{2n-5}_3}{\partial x}
+\frac{\partial z_2^{2n-3} }{\partial z^{2n-5}_4}
\frac{\partial z^{2n-5}_4}{\partial x}
  &\text{if}\quad  x \in D_{n,1}^0\cap D_{n-1,1}^1\\
\frac{\partial z_3^{2n-3}}{\partial z^{2n-5}_5}\frac{\partial z^{2n-5}_5}{\partial x}+
\frac{\partial z_3^{2n-3}}{\partial z^{2n-5}_6}\frac{\partial z^{2n-5}_6}{\partial x}
 &\text{if}\quad  x \in D_{n,1}^1\cap D_{n-1,2}^0 \\
\frac{\partial z_4^{2n-3} }{\partial z^{2n-5}_7}\frac{\partial z^{2n-5}_7}{\partial x}+  
\frac{\partial z_4^{2n-3} }{\partial z^{2n-5}_8}\frac{\partial z^{2n-5}_8}{\partial x}
&\text{if}\quad  x \in D_{n,1}^1\cap D_{n-1,2}^1
\end{cases}
\\&=
\begin{cases}
\frac{\partial z_1^{2n-5} }{\partial x}  &\text{if}\quad  x \in D_{n,1}^0\cap D_{n-1,1}^0\cap D^0_{n-2,1},\\
\frac{\partial z_2^{2n-5} }{\partial x}  &\text{if}\quad  x \in D_{n,1}^0\cap D_{n-1,1}^0\cap D^1_{n-2,1},\\
\frac{\partial z_3^{2n-5} }{\partial x}  &\text{if}\quad  x \in D_{n,1}^0\cap D_{n-1,1}^1
\cap D^0_{n-2,2}, \\
\frac{\partial z_4^{2n-5} }{\partial x}  &\text{if}\quad  x \in D_{n,1}^0\cap D_{n-1,1}^1
\cap D^1_{n-2,2},\\
\frac{\partial z_5^{2n-5} }{\partial x}  &\text{if}\quad  x \in D_{n,1}^1\cap D_{n-1,2}^0\cap D^0_{n-2,3},\\
\frac{\partial z_6^{2n-5} }{\partial x}  &\text{if}\quad  x \in D_{n,1}^1\cap D_{n-1,2}^0\cap D^1_{n-2,3},\\
\frac{\partial z_7^{2n-5} }{\partial x}  &\text{if}\quad  x \in D_{n,1}^1\cap D_{n-1,2}^1
\cap D^0_{n-2,4}, \\
\frac{\partial z_8^{2n-5} }{\partial x}  &\text{if}\quad  x \in D_{n,1}^1\cap D_{n-1,2}^1
\cap D^1_{n-2,4}.
\end{cases}
\end{align*}
Hence, repeating the same way as the above, we have
\begin{align*}
\frac{\partial z^{2n+1}_1}{\partial x}&=
\begin{cases}
\frac{\partial z_1^1 }{\partial x}  &\text{if}\quad  x \in D_{n,1}^0\cap D^0_{n-1,1}\cap\cdots\cap D_{2,1}^0\cap D_{1,1}^0\\
\frac{\partial z_2^1 }{\partial x}  &\text{if}\quad  x \in D_{n,1}^0\cap D_{n-1,1}^0\cap\cdots \cap D^0_{2,1}\cap D_{1,1}^1\\
\frac{\partial z_3^1 }{\partial x}  &\text{if}\quad  x \in D_{n,1}^0\cap D_{n-1,1}^0\cap\cdots \cap D^1_{2,1}\cap  D_{1,2}^0\\
&\qquad\vdots\\
\frac{\partial z_{2^n}^1 }{\partial x}  &\text{if}\quad  x \in D_{n,1}^1\cap D_{n-1,2}^1\cap\cdots\cap D^1_{2,2^{n-2}}\cap D_{1,2^{n-1}}^1\\
\end{cases}
\\&=
\begin{cases}
w_1 &\text{if}\quad  x \in D_{n,1}^0\cap D_{n-1,1}^0\cap\cdots\cap D^0_{2,1}\cap D_{1,1}^0
\cap D_{0,1},\\
w_2 &\text{if}\quad  x \in D_{n,1}^0\cap D_{n-1,1}^0\cap\cdots\cap D^0_{2,1}\cap D_{1,1}^1
\cap D_{0,2},\\
w_3  &\text{if}\quad  x \in D_{n,1}^0\cap D_{n-1,1}^0\cap\cdots \cap D^1_{2,1}\cap  D_{1,2}^0
\cap D_{0,3},\\
&\qquad\vdots\\
w_{2^n} &\text{if}\quad  x \in D_{n,1}^1\cap D_{n-1,2}^1\cap\cdots\cap D^1_{2,2^{n-2}}\cap D_{1,2^{n-1}}^1
\cap D_{0,2^n}.\\
\end{cases}
\end{align*}
From these formulas with algebraic observation, we can deduce $\delta^k_n(j)$ and $J^k_n(j)$.



\vspace{0.5cm}
\noindent
{\bf Acknowledgments.}\ 
Research of  TY  was partly supported by the JSPS Grants-in-Aid for Scientific
Research 24H00186 and 20H01819.


\vspace{0.5cm}
\noindent
{\bf Conflict of Interest.}\ 
The authors have no conflicts to disclose.
\bibliographystyle{amsplain}

\begin{thebibliography}{10} 






\bibitem{BL} 
J. Bourgain and D. Li,  
\textit{Strong ill-posedness of the incompressible Euler equations in borderline Sobolev spaces}, 
 Invent. math., \textbf{201}, (2015), 97-157.

\bibitem{BL1} 
J. Bourgain and D. Li,  
\textit{Strong illposedness of the incompressible Euler equation in integer $C^m$ spaces},  
Geom. funct. anal., \textbf{25}, (2015), 1-86. 

\bibitem{Chae}
D. Chae,  
\textit{Local existence and blow-up criterion for the Euler equations in the Besov spaces},  
 Asymptot. Anal., \textbf{38}, (2004), 339-358.



\bibitem{C}
G. Cybenko, 
\textit{Approximation by superpositions of a sigmoidal Function},  
 Math. Control Signals Systems, \textbf{2},  (1989), 303-314.



\bibitem{EJ}
T. Elgindi and I.-J.  Jeong, 
\textit{Ill-posedness for the incompressible Euler equations in critical Sobolev spaces}, 
Ann. PDE, \textbf{3},   (2017), 7.


\bibitem{EMW}
W. E, C. Ma, and L. Wu, \textit{The barron space and the flow-induced function spaces for neural network
models}, Constructive Approximation, \textbf{55}, (2022), 369–406.

\bibitem{EM} 
T. Elgindi and N. Masmoudi, 
\textit{$L^\infty$ ill-posedness for a class of equations arising in hydrodynamics}, 
 Arch. Ration. Mech. Anal., \textbf{235},  (2020), 1979-2025. 




\bibitem{H}
Kurt Hornik, \textit{Some new results on neural network approximation}, Neural Networks, \textbf{6}, (1993), 1069–1072.

\bibitem{HS}
S. Hayakawa and T. Suzuki, 
\textit{On the minimax optimality and superiority of deep neural network learning over sparse parameter spaces}, 
 Neural Networks, \textbf{123}, (2020), 343-361.




\bibitem{IF}
M. Imaizumi and K. Fukumizu, 
\textit{Deep Neural Networks Learn Non-Smooth Functions Effectively}, 
Proc.  Twenty-Second Inter. Conf.  Artificial Intel. Stat., PMLR, \textbf{89},  (2019), 869-878.







\bibitem{Kato}
T. Kato, 
\textit{Nonstationary flows of viscous and ideal fluids in $\mathbb{R}^3$}, 
J. Func. Anal., \textbf{9}, (1972),  296-305.

\bibitem{KP}
T. Kato  and G.  Ponce, 
\textit{Commutator estimates and the Euler and Navier-Stokes equations}, 
Comm. Pure Appl. Math., \textbf{41}, (1988), 891-907.






\bibitem{K}
 S. Kuratsubo, 
\textit{On pointwise convergence of Fourier series of the indicator function of D dimensional ball}, 
J. Fourier Anal. Appl., \textbf{16}, (2010), 52-59.

\bibitem{KNO}
S. Kuratsubo,  E. Nakai  and K. Ootsubo,
\textit{Generalized Hardy identity and relations to Gibbs-Wilbraham and Pinsky phenomena}, 
 J. Funct. Anal., \textbf{259},  (2010), 315-342.

\bibitem{KN}
S. Kuratsubo and E. Nakai,  
\textit{Multiple Fourier series and lattice point problems}, 
J. Funct. Anal., \textbf{282},  (2022), 109272.

\bibitem{LL} 
B. Liu and Y. Liang, \textit{Optimal function approximation with ReLU neural networks}, Neurocomputing, \textbf{435}, (2021), 216-227.



\bibitem{MY} 
G. Misio{\l}ek and T. Yoneda, 
\textit{Continuity of the solution map of the Euler equations in H\"older spaces 
and weak norm inflation in Besov spaces},  
Trans. Amer. Math. Soc., \textbf{370}, (2018), 4709-4730. 


\bibitem{PP}
 H. C. Pak and  Y.J. Park, 
\textit{Existence of solution for the Euler equations in a critical Besov space $B^1_{\infty,1}(\mathbb{R}^d)$}, 
Comm. Part. Differ. Equ., \textbf{29},  (2004), 1149-1166.




\bibitem{SH}
Schmidt-Hieber, J. (2020).
Nonparametric regression using deep neural networks with ReLU activation
function.
{\it The Annals of Statistics, 48},  1875-1897.




\bibitem{Steunwart}
I. Steunwart, \textit{On the influence of the kernel on the consistency of support vector machines}, J.Mach. Lern. Res, \textbf{2}, (2002), 67-93.

\bibitem{S}
T. Suzuki,  
\textit{Adaptivity of deep ReLU network for learning in Besov and mixed smooth Besov spaces:
optimal rate and curse of dimensionality}, 
 The 7th International Conference on Learning Representations, (ICLR), (2019).




\bibitem{TS}
K. Tsuji and T. Suzuki, 
\textit{Estimation error analysis of deep learning on the regression problem on the variable exponent Besov space},  
Electron. J. Stat., \textit{15},  (2021), 1869-1908.



\bibitem{V}
M. Vishik, 
\textit{Hydrodynamics in Besov spaces}, 
 Arch. Rat. Mech. Anal., \textbf{145}, (1998), 197-214. 


\bibitem{Y}
D. Yarotsky, 
\textit{Error bounds for approximations with deep ReLU networks}, 
Neural Networks, \textbf{94},  (2017), 103-114.


\end{thebibliography}


\end{document}